\newtheorem{theorem}{Theorem}
\newtheorem{corollary}{Corollary}[theorem]
\newtheorem{lemma}[theorem]{Lemma}
\newtheorem{definition}[theorem]{Definition}
\newcommand{\D}{\mathbf{D}}
\renewcommand{\H}{\mathcal{H}}
\newcommand{\G}{\mathcal{G}}
\newcommand{\X}{\mathcal{X}}
\newcommand{\Y}{\mathcal{Y}}
\newcommand{\E}{\mathop{\mathbb{E}}}
\newcommand{\R}{\mathbb{R}}
\newcommand{\Hi}{\mathbb{H}}
\newcommand{\Ra}{\mathfrak{R}}
\renewcommand{\H}{\mathcal{H}}
\renewcommand{\S}{\mathcal{S}}
\newcommand{\argmin}{\mathop{\mathrm{argmin}}}
\author{Michael Rabadi \\
Spotify \\
New York, NY 10011 \\
\texttt{mrabadi@spotify.com} \\
}
\title{Generalization bound for kernel similarity learning}
\begin{document}

\maketitle

\begin{abstract}%
Similarity learning has received a large amount of interest and is an important tool for many scientific and industrial applications. In this framework, we wish to infer the distance (similarity) between points with respect to an arbitrary distance function $d$. Here, we formulate the problem as a regression from a feature space $\X$ to an arbitrary vector space $\Y$, where the Euclidean distance is proportional to $d$. We then give Rademacher complexity bounds on the generalization error. We find that with high probability, the complexity is bounded by the maximum of the radius of $\X$ and the radius of $\Y$. 
\end{abstract}

\section{Introduction} \label{sec:introduction}

There are many applications where scientists wish to infer similarity between samples. In this framework, we are given a sample $S = (x_1, x_2, ..., x_m), x \in \R^N$ of points with associated features in some vector space $\X \subset \R^N$ and a corresponding distance matrix $\D$, where $\D_{ij} = d(x_i, x_j)$, for some arbitrary distance function (similarity) $d$. The goal is to learn a mapping $h : \X \to \Y$, where the Euclidean distance in $\Y$ is close to the corresponding distances of $\D$. Of course it is not sufficient to simply reconstruct $\D$, since the goal is to infer the distance between new points. Indeed, this framework is completely general and immediately relevant to many scientific and industrial domains.

One notable application comes from neuroscience, where scientists may simultaneously record neural signals and behavioral measurements. One concrete example comes from object recognition research, where a researcher may want to relate behavior (confusion rate between pairs of objects) and neural activity for different objects \cite{ dicarlo2007, hung2005, kriegeskorte2012}. In many cases, neuroscientists will use a linear classifier to infer separability of the neural signal, which implies that a decision making part of the brain could linearly categorize stimuli based on those neural signals (see \cite{dicarlo2007} for review). Unfortunately, this approach does not capture the full behavioral profile and reduces object recognition to a binary behavior. Another approach is to use dimensionality reduction algorithms to infer categorical structure in the neural signals and relate this to behavior \cite{kiani2007, kriegeskorte2008}. Unfortunately, this approach does not generalize to new data, since all of the pairwise neural comparisons and pairwise behavioral comparisons must be measured. Thus, introducing a new object requires comparing to all $m$ previously measured objects. In many cases, this is impractical. Similarity learning is a natural solution to all of these drawbacks. However, we must first transform a behavioral confusion matrix into a distance matrix. We can do this simply with the following equation, $\D_{ij} = 1 - \mathbf{C}_{ij}$, where $\mathbf{C}$ is the confusion matrix. Thus, $\D$ implies a bounded space $\Y$ with radius $\beta = 0.5$. Indeed, learning the distances between points gives a finer measure of separation in neural signals than simply demonstrating the ability for a binary classifier to classify objects based on the corresponding neural signal. Furthermore, we can infer distances between new objects based on their neural signature and guarantee the total error with high probability. This allows us to precisely decide the number of objects to compare such that we can infer the rest of the behavior with the learned hypothesis $h$.

Another important example comes from recommendation systems. Suppose, for example, that we have a global encoding of products in some vector space $\X$, where the distance between two products implies some relationship. We then want to recommend new products to a customer based on their purchase history $S$. One approach is to simply use the distances between the products in $\X$, but there is no reason to believe that global similarity between products is meaningful for a customer. Instead, we might want to construct a new space $\Y$ where the distance between products is proportional to a user's tastes. In this case, we might construct a similarity measure $d$ that is related to a user's purchase or browsing behavior. We can then recommend products by selecting new ones that are similar to a user's purchase history in $\Y$.

Similarity learning is a general framework with diverse applications, however there has been little work on the generalization analysis of the problem. Previously, \cite{cao2016} derived a generalization bound for the case where $\Y$ is a linear transformation of $\X$. Their bound was in terms of the norm of the transformation matrix $\mathbf{A}$ and a term related to the maximum distance between two points in $\X$. They also gave explicit bounds for common norms. Unfortunately their analysis was limited to linear transformations of $\X$.

Here, we give new Rademacher complexity bounds for the similarity learning problem. We give a different formulation of the objective function for learning than \cite{cao2016}, which is convex for certain hypothesis classes $\H$. We then give explicit bounds for both linear and kernel solutions to the optimization problem. We find that, in general, model complexity grows polynomially with the radius of the space. We note, however, that our approach could be used for neural networks as well, since the Rademacher complexity is general. However, we do not yet have the statistical tools to give meaningful upper-bounds on the Rademacher complexity for non-convex hypothesis classes. Therefore, while we will stick with kernel methods in this paper, our research does not preclude the use of more complex hypothesis classes.

\subsection{Preliminaries} \label{sec:prelim}

One simple approach for solving the similarity learning problem is to formulate it as a regression problem. In this case, however, we are not interested in the values of the output space. Instead, we only care about the distance between those points. Therefore, we might update the weights of the regression with respect to the following optimization problem:

$$
\argmin_{\theta} \frac{1}{m^2} \mathbf{1}^\top ( (\hat{\D}_\theta - \D) \otimes (\hat{\D}_\theta - \D) ) \mathbf{1}
$$

Where $\otimes$ denotes the Hadamard product and $\hat{\D}_\theta$ is the distance matrix between the values computed with our learned parameterized mapping $h$. In the case of a kernel mapping, we can regularize with the weight matrix (see Corollaries \ref{corollary:linear} and \ref{corollary:kernel} for a theoretical justification).

$$
\argmin_{\theta} \frac{1}{m^2} \mathbf{1}^\top ( (\hat{\D}_\theta - \D) \otimes (\hat{\D}_\theta - \D) ) \mathbf{1} + \lambda \|w\|_2
$$

where $\lambda$ is the regularization parameter. Also note that, in general, we don't care about the dimensionality of the transformed space $\Y$, and so a simple way of regularizing would be to lower its dimensionality. Indeed, this is an argument for embedding dimensionality reduction into the learning problem. However, as we will see in Corollary \ref{corollary:kernel}, the model complexity does not depend on the dimensionality of the data.

As with most learning problems, we are not interested in minimizing the empirical error of a hypothesis $\hat{R}(h)$. Thus, we wish to prove probabilistic bounds on the generalization error $R(h)$ for some hypothesis $h \in \H$. Here, we will assume a fixed, but unknown distance function $d$. We further assume that data points $x \in \X$ are drawn i.i.d. according to a fixed, but unknown distribution $D$.

Generalization error is typically given as a function of the empirical error $\hat{R}(h)$ plus some complexity of the corresponding hypothesis class. A commonly used complexity is the Rademacher complexity $\Ra_m(\H)$, which is known to have particularly nice, data-dependent properties \cite{bartlett2002}. Formally, the Rademacher complexity is defined as follows: 

\begin{definition}[Rademacher complexity \cite{mohri2012}]
\label{def:rademacher}
Let $S = (z_1, ..., z_m)$ be a fixed sample drawn according to some fixed, but arbitrary distribution $D$. Then for any sample of size $m > 1$, the \textit{Rademacher complexity} of a family of functions $\G$ is defined as follows:

$$
\Ra_m(\G) = \E_{S, \sigma} \bigg[ \sup_{g \in \G} \frac{1}{m} \sum_{i=1}^m \sigma_i g(z_i) \bigg]
$$

Where $\sigma = (\sigma_1, ..., \sigma_m)$ with $\sigma_i$ being a Rademacher random variable, taking values uniformly in $\{-1, +1\}$.
\end{definition}

Indeed, one can think of the Rademacher complexity as the ability of a hypothesis class to correlate with random noise \cite{mohri2012}. Unfortunately, $\Ra_m(\H)$ is not typically accessible. However, we can usually get around this problem by deriving upper bounds, which gives us an explicit bound on the generalization error, which we will do in the next section.

\section{Theoretical analysis} \label{sec:theory}

In this section we give a Rademacher complexity bound for the similarity learning problem. As previously discussed, we wish to minimize the following loss $g = \frac{1}{m^2} \mathbf{1}^\top ( (\hat{\D} - \D) \otimes (\hat{\D} - \D) ) \mathbf{1}$, which is the squared error between the predicted distance matrix $\hat{\D}$ and the true distance matrix $\D$. 

We wish to bound the generalization error for some hypothesis $h$ in a hypothesis class $\H$. To do this, we will need to make use of the following lemma:

\begin{lemma}
\label{lemma:martingale}
Let $S$ and $S^\prime$ be two samples that differ by a single point. We will define $\varphi(S) = \sup_{g \in \G} \E[g] - \hat{\E}_S[g]$. Furthermore, assume that $(\|h(x_i) - h(x_j)\|_2 - \D_{ij})^2 \leq M^2$. Then,

$$
|\varphi(S) - \varphi(S^\prime)| \leq \frac{2M}{m}
$$
\end{lemma}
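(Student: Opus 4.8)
The plan is to establish the bounded-differences (McDiarmid-type) property for $\varphi$ so that the function $\varphi(S)$ concentrates around its mean; this lemma is the standard first ingredient in deriving a Rademacher bound via McDiarmid's inequality. The key observation is that $\varphi(S) = \sup_{g \in \G} \big(\E[g] - \hat{\E}_S[g]\big)$ depends on the sample $S$ only through the empirical average $\hat{\E}_S[g] = \frac{1}{m}\sum_i g(z_i)$, since the true expectation $\E[g]$ is independent of the sample. Here each $g$ is the squared-distance loss, and by hypothesis each summand satisfies $0 \le g(z_i) \le M^2$ (more precisely the loss values lie in an interval of length $M^2$, using $(\|h(x_i)-h(x_j)\|_2 - \D_{ij})^2 \le M^2$).

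\medskip

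First I would fix two samples $S$ and $S^\prime$ that differ in exactly one coordinate, say the $k$-th point, and write out the difference of the two suprema. The standard trick is to bound the difference of suprema by the supremum of the difference: for any functions $a(g), b(g)$ one has $\sup_g a(g) - \sup_g b(g) \le \sup_g \big(a(g) - b(g)\big)$. Applying this with $a(g) = \E[g] - \hat{\E}_S[g]$ and $b(g) = \E[g] - \hat{\E}_{S^\prime}[g]$, the $\E[g]$ terms cancel and I am left with
$$
\varphi(S) - \varphi(S^\prime) \le \sup_{g \in \G} \big( \hat{\E}_{S^\prime}[g] - \hat{\E}_S[g] \big).
$$
Since $S$ and $S^\prime$ differ only in the $k$-th point, the empirical averages differ only in one term, so $\hat{\E}_{S^\prime}[g] - \hat{\E}_S[g] = \frac{1}{m}\big(g(z_k^\prime) - g(z_k)\big)$.

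\medskip

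Next I would bound this single-term difference. Because each loss value lies in an interval of length at most $M^2$, the naive bound would be $\frac{M^2}{m}$; to recover the stated $\frac{2M}{m}$ one instead controls the difference $|g(z_k^\prime) - g(z_k)|$ directly. Writing $g = (\|h(x_i) - h(x_j)\|_2 - \D_{ij})^2 = u^2$ with $|u| \le M$, the difference of two such squared terms factors as $u^2 - v^2 = (u-v)(u+v)$, and here the relevant bound comes from $|u^2 - v^2| \le |u+v|\,|u-v|$ together with $|u|,|v| \le M$; the symmetry of the expression and the fact that one point changes give a factor bounded by $2M$ (this is where the precise $2M$ rather than $M^2$ enters). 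By symmetry the same bound holds for $\varphi(S^\prime) - \varphi(S)$, yielding $|\varphi(S) - \varphi(S^\prime)| \le \frac{2M}{m}$.

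\medskip

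The main obstacle I anticipate is pinning down exactly how the constant $2M$ arises rather than $M^2$ or $2M^2$, since this depends on the precise normalization of $\varphi$ and on whether the intended bound on a single loss increment is $|g(z_k^\prime)-g(z_k)| \le 2M$ or whether an additional linearization of the squared loss is used. Getting the algebra of the squared-distance term to collapse cleanly to a factor of $2M$ is the delicate point; the supremum manipulation and the one-point-difference argument are otherwise routine.
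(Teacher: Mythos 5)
There is a genuine gap, and it is structural. You model the empirical loss as a single-index average $\hat{\E}_S[g] = \frac{1}{m}\sum_i g(z_i)$, so that changing one sample point changes exactly one summand. But in this paper the empirical loss is a pairwise quantity, $\hat{\E}_S[g] = \frac{1}{m^2}\sum_{i,j}\big(\|h(x_i)-h(x_j)\|_2 - \D_{ij}\big)^2$, a double sum over all $m^2$ ordered pairs. Replacing a single point $x_k$ by $x_k'$ alters every pair in which $x_k$ participates --- roughly $2m$ of the $m^2$ terms (the $k$-th row and column of the matrix) --- not one term. Your identity $\hat{\E}_{S'}[g] - \hat{\E}_S[g] = \frac{1}{m}\big(g(z_k')-g(z_k)\big)$ is therefore false for this loss, and the entire factor of $2$ in the bound, which you correctly flag as the delicate point, comes precisely from this counting: each changed pair-term lies in $[0, M^2]$, so the difference of empirical averages is at most $\frac{2m \cdot M^2}{m^2} = \frac{2M^2}{m}$. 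The paper's proof is exactly this two-line counting argument (after the same sup-difference step you use, which is fine); no factorization $u^2 - v^2 = (u+v)(u-v)$ is needed or used, and your attempt to extract $2M$ from that identity does not close, as you yourself observe.

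One further point worth noting: with the hypothesis as stated, namely that each squared term is bounded by $M^2$, the counting argument yields $\frac{2M^2}{m}$, not $\frac{2M}{m}$. The paper's own justification (``at most $2m$ distances change by at most $M$'') silently bounds each squared summand by $M$ rather than $M^2$, so the stated constant only matches under an additional assumption such as $M \le 1$. So your instinct that the constant is the fragile part of this lemma is correct --- but the resolution lies in the pair-counting and in the per-term bound $M^2$ versus $M$, not in any algebra on the squared distance.
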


\begin{proof}

\begin{align*}
| \varphi(S^\prime) - \varphi(S) | &= \sup_{g \in \H} \hat{\E}_S [g] - \hat{\E}_{S^\prime}[g] | \\
&= \sup_{g \in \H} \frac{ \mathbf{1}^\top ( (\hat{\D} - \D) \otimes (\hat{\D} - \D) ) \mathbf{1} - \mathbf{1}^\top ( (\hat{\D}^\prime - \D^\prime) \otimes (\hat{\D}^\prime - \D^\prime) ) \mathbf{1}}{m^2} \leq \frac{2M}{m},
\end{align*}

where we use the fact that changing a single point will change at most $2m$ distances by at most $M$, by assumption.
\end{proof}

\begin{lemma}
\label{lemma:rademacher}
Define $\varphi(S)$ as above and let $\G$ be the class of losses over the hypothesis class $\H$. Then,

$$
\E_S[\varphi(S)] \leq 2 \Ra_m(\G)
$$
\end{lemma}

\begin{proof}
\begin{align*}
\E_S[\varphi(S)] &= \E_S \bigg[ \sup_{g \in \H} \E[g] - \hat{E}_S[g] \bigg] \\
&= \E_{S} \bigg[ \sup_{g \in \H}  \E_{S^\prime}\big[ \hat{\E}_{S^\prime}[g] - \hat{\E}_S[g] \big] \bigg] \\
&\leq \E_{S,S^\prime} \bigg[ \sup_{g \in \H} \hat{\E}_{\S^\prime}[g] - \hat{\E}_\S[g] \bigg] \\
&= \E_{S, S^\prime} \bigg[ \sup_{g \in \H} \frac{1}{m^2} \mathbf{1}^\top ( (\hat{\D^\prime} - \D^\prime) \otimes (\hat{\D^\prime} - \D^\prime) ) \mathbf{1} - \mathbf{1}^\top ( (\hat{\D} - \D) \otimes (\hat{\D} - \D) ) \mathbf{1} \bigg] \\
&= \E_{S, S^\prime} \bigg[ \sup_{h \in \H} \frac{1}{m^2} \sum_{ij} \Big( \big( \| h(x_i^\prime) - h(x_j^\prime) \|_2 - \D_{ij}^\prime \big)^2 - \big( \|h(x_i) - h(x_j) \|_2 - \D_{ij} \big)^2 \Big) \bigg] \\
&= \E_{\sigma,S, S^\prime} \bigg[ \sup_{h \in \H} \frac{1}{m^2} \sum_{ij} \sigma_{ij} \Big( \big( \| h(x_i^\prime) - h(x_j^\prime) \|_2 - \D_{ij}^\prime \big)^2 - \big( \|h(x_i) - h(x_j) \|_2 - \D_{ij} \big)^2 \Big) \bigg] \\
&\leq  \E_{\sigma, S^\prime} \bigg[ \sup_{h \in \H} \frac{1}{m^2} \sum_{ij} \sigma_{ij}  \big( \| h(x_i^\prime) - h(x_j^\prime) \|_2 - \D_{ij}^\prime \big)^2 \bigg] \\
&\qquad + \E_{\sigma, S}\bigg[ \sup_{h \in \H} \frac{1}{m^2} \sum_{ij} \sigma_{ij} \big( \|h(x_i) - h(x_j) \|_2 - \D_{ij} \big)^2 \bigg] \\
&= 2 \E_{\sigma,S} \bigg[ \sup_{h \in \H} \frac{1}{m^2} \sum_{ij} \sigma_{ij} \big( \|h(x_i) - h(x_j)\|_2 - \D_{ij} \big)^2 \bigg] \\
&=2 \Ra_m(\G)
\end{align*}

where the second equality holds by the i.i.d. assumption, the first inequality holds by Jensen's inequality, the second inequality holds by the sub-additivity of the supremum function, and we use the fact that $\E[\sigma_i] = \E[-\sigma_i]$.
\end{proof}

\begin{theorem}
\label{thm:rademacher}
Let $S = (x_1, x_2, ..., x_m)$ be a sample of data drawn i.i.d. from an arbitrary, but fixed distribution $D$. Let $\H = \{h : (\|h(x_i) - h(x_j)\|_2 - \D_{ij})^2 \leq M^2\}$ and $\G$ be the class of losses over $\H$. Then, with probability at least $1-\delta$, for all $h \in \H$

$$
R(h) \leq \hat{R}(h) + 2 \Ra_m(\G) + M \sqrt{\frac{2 \log \frac{1}{\delta}}{m}}
$$

\end{theorem}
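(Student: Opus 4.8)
The plan is to deduce the high-probability bound by combining the two preceding lemmas through a concentration argument based on the bounded differences (McDiarmid) inequality. First I would observe that for any fixed $h \in \H$ with associated loss $g \in \G$, the quantity $\varphi(S) = \sup_{g \in \G} \E[g] - \hat{\E}_S[g]$ dominates the generalization gap, since passing to the supremum can only enlarge the difference: $R(h) - \hat{R}(h) = \E[g] - \hat{\E}_S[g] \leq \varphi(S)$. Because this holds for every $h$, bounding $\varphi(S)$ with high probability will yield a bound that is uniform over all of $\H$ at once, which is exactly what the statement requires.

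Next I would invoke Lemma \ref{lemma:martingale}, which shows that $\varphi$, viewed as a function of the $m$ i.i.d. sample points, changes by at most $2M/m$ whenever a single point is replaced. This is precisely the stability hypothesis needed for McDiarmid's inequality. Applying it with bounded-difference constants $c_i = 2M/m$ gives $\sum_{i=1}^m c_i^2 = 4M^2/m$, so that with probability at least $1-\delta$,
$$
\varphi(S) \leq \E_S[\varphi(S)] + M\sqrt{\frac{2\log\frac{1}{\delta}}{m}}.
$$
The numerical constant comes out correctly by setting $\exp\!\big(-2\epsilon^2 \big/ (4M^2/m)\big) = \delta$ and solving for the deviation $\epsilon$.

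Finally, Lemma \ref{lemma:rademacher} controls the expectation via $\E_S[\varphi(S)] \leq 2\Ra_m(\G)$. Chaining the three inequalities then gives, with probability at least $1-\delta$ and simultaneously for every $h \in \H$,
$$
R(h) - \hat{R}(h) \leq \varphi(S) \leq \E_S[\varphi(S)] + M\sqrt{\frac{2\log\frac{1}{\delta}}{m}} \leq 2\Ra_m(\G) + M\sqrt{\frac{2\log\frac{1}{\delta}}{m}},
$$
which rearranges to the claimed bound. I do not expect a substantial obstacle here, as this is the standard Rademacher generalization template; the only points requiring care are confirming that the class assumption $(\|h(x_i)-h(x_j)\|_2 - \D_{ij})^2 \leq M^2$ indeed makes the loss bounded so that McDiarmid applies, and tracking the constant so that the term $\sum_i c_i^2 = 4M^2/m$ produces exactly $M\sqrt{2\log(1/\delta)/m}$ rather than a looser factor.
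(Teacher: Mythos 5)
Your proposal is correct and follows essentially the same route as the paper's proof: bound the uniform deviation $\varphi(S)$ by McDiarmid's inequality using the $2M/m$ bounded-difference constants from Lemma \ref{lemma:martingale}, then control $\E_S[\varphi(S)]$ via Lemma \ref{lemma:rademacher} and chain the inequalities. Your version is in fact more careful than the paper's two-line proof, since you verify that $\sum_i c_i^2 = 4M^2/m$ yields exactly the stated constant $M\sqrt{2\log(1/\delta)/m}$.
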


\begin{proof}

We begin by using McDiarmid's inequality and Lemma \ref{lemma:martingale} to get the following upper bound:

With probability at least $1 - \delta$:

$$
|\varphi(S) - \E_S[ \varphi(S)] | \leq M \sqrt{ \frac{2 \log \frac{1}{\delta}}{m}}
$$

The bound follows immediately by Lemma \ref{lemma:rademacher}.
\end{proof}

Theorem \ref{thm:rademacher} gives us a Rademacher complexity bound on the generalization error for the distances between $m$ points. Notably, the bound depends on $M$, which is not known in general. However, if the domain of the output space $\Y$ is bounded, then $M$ can be trivially calculated. Below we give an upper bound on the Rademacher complexity for the case where $\H$ is linear.

\begin{corollary}
\label{corollary:linear}

Let $\H = \{ x \mapsto w \cdot x : \|w\| \leq \Lambda\}$, $\|x\| \leq r$, and $\D_{ij} \leq \beta$. Then, with probability at least $1-\delta$, for all $h \in \H$:

$$
R(h) \leq \hat{R}(h) + \frac{2 M^2}{m} + M \sqrt{\frac{2 \log \frac{1}{\delta}}{m}}
$$

where $M = \Lambda \max\{ 2r, \beta \}$.
\end{corollary}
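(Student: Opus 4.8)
The plan is to instantiate Theorem~\ref{thm:rademacher} for the linear class. This requires two ingredients: a concrete value of the loss bound $M$ appearing in the definition of $\H$, and an explicit estimate $\Ra_m(\G) \le M^2/m$, so that the term $2\Ra_m(\G)$ in the theorem becomes the stated $2M^2/m$. Everything else in the corollary is then a direct substitution into Theorem~\ref{thm:rademacher}.

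First I would pin down $M$. For $h : x \mapsto w \cdot x$ we have $\|h(x_i)-h(x_j)\|_2 = |w\cdot(x_i-x_j)|$, so Cauchy--Schwarz and the triangle inequality give $\|h(x_i)-h(x_j)\|_2 \le \|w\|\,\|x_i-x_j\| \le \Lambda(\|x_i\|+\|x_j\|) \le 2r\Lambda$. Since $\|h(x_i)-h(x_j)\|_2$ and $\D_{ij}$ are both nonnegative with $\D_{ij}\le\beta$, the elementary bound $|a-b|\le\max\{a,b\}$ for $a,b\ge 0$ yields $\big|\,\|h(x_i)-h(x_j)\|_2 - \D_{ij}\,\big| \le \max\{2r\Lambda,\beta\} \le \Lambda\max\{2r,\beta\} =: M$, which certifies the defining constraint $(\|h(x_i)-h(x_j)\|_2-\D_{ij})^2 \le M^2$. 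The last inequality uses $\Lambda\ge 1$ to absorb $\beta$ into $\Lambda\beta$; otherwise one keeps the sharper $M=\max\{2r\Lambda,\beta\}$. This part is routine.

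Next I would bound $\Ra_m(\G)$, which by Lemma~\ref{lemma:rademacher} equals $\E_{\sigma,S}\big[\sup_{h\in\H}\tfrac{1}{m^2}\sum_{ij}\sigma_{ij}(\|h(x_i)-h(x_j)\|_2-\D_{ij})^2\big]$. The idea is to peel off the loss with Talagrand's contraction lemma, applied over the pairwise index set: viewing $\{(i,j)\mapsto\|h(x_i)-h(x_j)\|_2\}$ as a single function class on the $m^2$ pair-samples, and noting that on the range $[0,2r\Lambda]$ the map $t\mapsto(t-\D_{ij})^2$ is $2M$-Lipschitz, contraction replaces the square by $\|h(x_i)-h(x_j)\|_2=|w\cdot(x_i-x_j)|$ at the cost of a factor $2M$. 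A second contraction (the absolute value is $1$-Lipschitz) reduces to the plain linear functional, leaving $\sup_{\|w\|\le\Lambda}\tfrac{1}{m^2}\,w\cdot\sum_{ij}\sigma_{ij}(x_i-x_j) = \tfrac{\Lambda}{m^2}\big\|\sum_{ij}\sigma_{ij}(x_i-x_j)\big\|$. Jensen's inequality and independence of the $\sigma_{ij}$ then give $\E\big\|\sum_{ij}\sigma_{ij}(x_i-x_j)\big\| \le \big(\sum_{ij}\|x_i-x_j\|^2\big)^{1/2} \le 2rm$, so the linear term is $O(r\Lambda/m)$. Assembling the factors yields a bound of the form $C\,Mr\Lambda/m$, and using $2r\Lambda\le M$ this is $O(M^2/m)$, matching $\Ra_m(\G)\le M^2/m$ up to the numerical constant.

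The main obstacle I anticipate is the Rademacher estimate, and specifically hitting the constant in $\Ra_m(\G)\le M^2/m$ exactly. The naive contraction route above produces $\Ra_m(\G)\le 2M^2/m$ (the Lipschitz factor $2M$ from the square times the $2r\Lambda\le M$ from the linear part), which would inflate the additive term to $4M^2/m$; closing the factor-of-two gap to the stated $2M^2/m$ requires either a sharper contraction/linear estimate or a more careful direct argument. An alternative I would consider is a Massart-type bound applied directly to the bounded loss vectors $\big((\|h(x_i)-h(x_j)\|_2-\D_{ij})^2\big)_{ij}\in[0,M^2]^{m^2}$, exploiting that each coordinate is bounded by $M^2$ and that there are $m^2$ independent signs. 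This constant-tracking is the one step I expect to absorb essentially all of the care; the value of $M$ and the final substitution into Theorem~\ref{thm:rademacher} are mechanical.
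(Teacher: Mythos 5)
Your skeleton --- fix $M$, bound $\Ra_m(\G)$, substitute into Theorem~\ref{thm:rademacher} --- is exactly the paper's, and your derivation of $M$ (Cauchy--Schwarz for the $2r\Lambda$ bound, then $|a-b|\le\max\{a,b\}$ for $a,b\ge 0$) is the same elementary argument the paper uses in its final line; your observation that the honest constant is $\max\{2r\Lambda,\beta\}$ rather than $\Lambda\max\{2r,\beta\}$ unless $\Lambda\ge 1$ is a point the paper glosses over as well. The divergence, and the genuine gap, is in the Rademacher estimate. The paper does not use contraction at all. It first removes the supremum by asserting
$$
\sup_{\|w\|\le\Lambda}\sum_{ij}\sigma_{ij}\big(\|w\cdot x_i-w\cdot x_j\|-\D_{ij}\big)^2\;\le\;\Lambda^2\sum_{ij}\sigma_{ij}\big(\|x_i-x_j\|-\D_{ij}\big)^2,
$$
which leaves a sup-free random sum; it then applies Jensen in the form $\E[X]\le(\E[X^2])^{1/2}$ and uses $\E_\sigma[\sigma_{ij}\sigma_{kl}]=0$ for $(i,j)\ne(k,l)$ to collapse the second moment to the diagonal, $\big(\sum_{ij}(\|x_i-x_j\|-\D_{ij})^4\big)^{1/2}\le m\max\{2r,\beta\}^2$, giving $\Ra_m(\G)\le\Lambda^2\max\{2r,\beta\}^2/m=M^2/m$ exactly. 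This second-moment computation is the ``more careful direct argument'' you were reaching for, and it is what your proposal lacks: as you yourself note, the contraction route stalls at $\Ra_m(\G)\le 2M^2/m$, hence $4M^2/m$ in the final bound, so the proof as proposed establishes only a weakened version of the stated corollary. Your fallback (a Massart-type bound on the bounded loss vectors) does not close this by itself either: Massart's lemma is for finite classes, and boundedness of the losses only becomes usable after the supremum over $w$ has been eliminated, which is precisely the step requiring an idea.

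That said, you should not feel obliged to reproduce the paper's constant, because the paper's sup-removal step is itself not licensed by Cauchy--Schwarz: the loss $(\|w\cdot x_i-w\cdot x_j\|-\D_{ij})^2$ is not homogeneous of degree $2$ in $w$ owing to the $\D_{ij}$ offset, the termwise inequality fails (take $\D_{ij}$ close to $\|x_i-x_j\|$ and $\|w\|$ small), and the mixed signs of the $\sigma_{ij}$ prevent any termwise comparison from summing correctly. So your contraction argument provably yields the corollary with $4M^2/m$, whereas the paper's $2M^2/m$ rests on an unjustified step. To submit a complete proof you should either state the bound with $4M^2/m$ and carry out the contraction rigorously (taking care to center the maps $t\mapsto(t-\D_{ij})^2$ so that they vanish at $0$, which costs nothing since $\E[\sigma_{ij}]=0$), or supply a correct argument for extracting $\Lambda^2$ before running the paper's diagonal second-moment computation.
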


\begin{proof}

\begin{align*}
\Ra_m(\G) &= \E_\sigma \bigg[ \sup_{h \in \H} \frac{1}{m^2} \sum_{ij} \sigma_{ij} \big( \|h(x_i) - h(x_j) \|_2 - \D_{ij} \big)^2 \bigg] \\
&= \frac{1}{m^2} \E_\sigma \bigg[ \sup_{\|w\| \leq \Lambda} \sum_{ij} \sigma_{ij} \big( \|w \cdot x_i - w \cdot x_j \|_2 - \D_{ij} \big)^2 \bigg] \\
&\leq \frac{\Lambda^2}{m^2} \E_\sigma \bigg[ \sum_{ij} \sigma_{ij} \big( \|x_i - x_j \|_2 - \D_{ij} \big)^2 \bigg] \\
&\leq \frac{\Lambda^2}{m^2} \bigg[ \E_\sigma \Big[ \sum_{ij} \sigma_{ij} \big(\|x_i - x_j\| - \D_{ij} \big)^2 \Big ]^2 \bigg]^{\frac{1}{2}} \\
&= \frac{\Lambda^2}{m^2} \bigg[ \E_\sigma \Big[ \sum_{ijkl} \sigma_{ij} \sigma_{kl} \big( \|x_i - x_j\| - \D_{ij} \big)^2 \big( \|x_k - x_l\| - \D_{ij} \big)^2 \Big] \bigg]^{\frac{1}{2}} \\
&= \frac{\Lambda^2}{m^2} \bigg[ \sum_{ij} \big( \|x_i - x_j\| - \D_{ij} \big)^4 \bigg]^{\frac{1}{2}} \\
&\leq \frac{\Lambda^2}{m^2} \bigg[ m^2 \max\{2r, \beta\}^4 \bigg]^{\frac{1}{2}} = \frac{\Lambda^2}{m} \max \{ 2r, \beta\} ^2
\end{align*}

Where the first inequality holds by the Cauchy-Schwarz inequality and supremum over $\|w\|$. The second inequality holds by Jensen's inequality. The expectation over $\sigma$ is eliminated because $\E_\sigma [\sigma_i \sigma_j] = 0$ whenever $i \neq j$. The final inequality holds by assumption and the non-negativity of distances. 
\end{proof}

Thus, we can guarantee that the squared distances will be close as long as the radius of the input and output distributions are small relative to the number of samples. Note that this bound also implies that the complexity grows quadratically with the norm of $w$.  More importantly, this bound can be computed directly from the data. A similar bound holds for the kernel formulation.

\begin{corollary}
\label{corollary:kernel}
Let $K : \X \times \X \to \R$ be a PDS kernel and let $\phi : \X \to \Hi$ be a feature mapping associated to $K$. Let $S \subseteq \{ x: K(x,x) \leq q^2\}$. Let $\H = \{x \mapsto w \cdot \phi(x): \|w\|_\Hi \leq \Lambda \}$. Then 

$$
\Ra_m(\G) \leq \frac{\Lambda^2}{m} \max \{\sqrt{2}q, \beta\}^2
$$
\end{corollary}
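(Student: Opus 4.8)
The plan is to follow exactly the template of the proof of Corollary~\ref{corollary:linear}, replacing the input points $x_i$ by their feature-space images $\phi(x_i) \in \Hi$ and tracking the single place where the geometry of $\Hi$ enters. First I would start from the definition of $\Ra_m(\G)$ and substitute the kernel hypothesis $h(x) = w \cdot \phi(x)$, so that $h(x_i) - h(x_j) = w \cdot (\phi(x_i) - \phi(x_j))$. Applying Cauchy--Schwarz in $\Hi$ gives $\|w \cdot \phi(x_i) - w \cdot \phi(x_j)\|_2 \leq \|w\|_\Hi \, \|\phi(x_i) - \phi(x_j)\|_\Hi$, and taking the supremum over $\|w\|_\Hi \leq \Lambda$ pulls out a factor $\Lambda^2$ in front of $\frac{1}{m^2}\E_\sigma[\sum_{ij}\sigma_{ij}(\|\phi(x_i)-\phi(x_j)\|_\Hi - \D_{ij})^2]$, precisely as in the linear corollary.

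The middle of the argument is purely algebraic and identical to the linear case. I would apply Jensen's inequality to bring the expectation inside a square root, expand the square into a double sum over indices $(i,j)$ and $(k,l)$, and use $\E_\sigma[\sigma_{ij}\sigma_{kl}] = 0$ whenever $(i,j) \neq (k,l)$ to annihilate every cross term. This collapses the expression to $\frac{\Lambda^2}{m^2}\big[\sum_{ij}(\|\phi(x_i) - \phi(x_j)\|_\Hi - \D_{ij})^4\big]^{1/2}$, the exact kernel analogue of the penultimate line in Corollary~\ref{corollary:linear}.

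The only genuinely new step, and the one I expect to be the crux, is bounding the feature-space distance $\|\phi(x_i) - \phi(x_j)\|_\Hi$ in terms of $q$. Writing it through the kernel, $\|\phi(x_i) - \phi(x_j)\|_\Hi^2 = K(x_i,x_i) - 2K(x_i,x_j) + K(x_j,x_j)$, and using the hypothesis $K(x,x) \leq q^2$, the two diagonal terms contribute at most $2q^2$. To obtain the stated $\sqrt{2}\,q$ rather than the weaker $2q$ that the triangle inequality alone would yield, one discards the cross term $-2K(x_i,x_j)$, which is a valid upper bound exactly when $K(x_i,x_j) \geq 0$; this is the situation for the nonnegative kernels that arise in practice (for instance a Gaussian kernel has $K(x,x)=1$ and $K(x_i,x_j) \geq 0$, so $\|\phi(x_i)-\phi(x_j)\|_\Hi \leq \sqrt{2}\,q$). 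Granting this, each factor obeys $|\,\|\phi(x_i)-\phi(x_j)\|_\Hi - \D_{ij}\,| \leq \max\{\sqrt{2}q, \beta\}$ by the same reasoning as in Corollary~\ref{corollary:linear} (one term is at most $\sqrt{2}q$, the other at most $\beta$, and distances are nonnegative), so $\sum_{ij}(\cdots)^4 \leq m^2 \max\{\sqrt{2}q, \beta\}^4$. Substituting back gives $\Ra_m(\G) \leq \frac{\Lambda^2}{m}\max\{\sqrt{2}q, \beta\}^2$, which completes the proof.
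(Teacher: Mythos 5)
Your proposal follows the paper's proof essentially step for step: Cauchy--Schwarz over $\|w\|_\Hi \le \Lambda$ to extract the factor $\Lambda^2$, Jensen's inequality together with $\E_\sigma[\sigma_{ij}\sigma_{kl}] = 0$ for $(i,j) \neq (k,l)$ to collapse the expression to $\frac{\Lambda^2}{m^2}\big[\sum_{ij}(\|\phi(x_i)-\phi(x_j)\|_\Hi - \D_{ij})^4\big]^{1/2}$, and then the kernel expansion of the feature-space distance. The one substantive point is the step you correctly flag as the crux: discarding the cross term $-2K(x_i,x_j)$ to get $\|\phi(x_i)-\phi(x_j)\|_\Hi \le \sqrt{2}\,q$ is valid only when $K(x_i,x_j) \ge 0$, and you state this condition explicitly, whereas the paper asserts the inequality ``holds because $K$ is PDS'' --- which is not sufficient, since a PDS kernel (e.g.\ the linear kernel) can take negative values, in which case only the weaker $2q$ bound from the triangle inequality survives. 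So your reconstruction matches the paper's argument and is, at the critical step, stated more honestly than the original.
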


\begin{proof}
\begin{align*}
\Ra_m(\G) &= \frac{1}{m^2} \E_\sigma \bigg[\sup_{\|w\| \leq \Lambda} \sum_{ij} \sigma_{ij}\big (\| w \cdot \phi(x_i) - w \cdot \phi(x_j) \|_\Hi - \D_{ij} \big )^2 \bigg] \\
&\leq \frac{\Lambda^2}{m^2} \E_\sigma \bigg[ \sum_{ij} \sigma_{ij} \big( \|\phi(x_i) - \phi(x_j) \|_\Hi - \D_{ij} \big)^2 \bigg] \\
&\leq \frac{\Lambda^2}{m^2} \bigg[ \sum_{ij} \big( \| \phi(x_i) - \phi(x_j) \|_\Hi - \D_{ij} \big)^4 \bigg ]^{\frac{1}{2}} \\
&= \frac{\Lambda^2}{m^2} \bigg[ \sum_{ij} \Big( \sqrt{ \|\phi(x_i)\|_\Hi^2 + \|\phi(x_j)\|_\Hi^2 - 2\langle \phi(x_i) | \phi(x_j) \rangle_\Hi} - \D_{ij} \Big)^4 \bigg]^{\frac{1}{2}} \\
&\leq \frac{\Lambda^2}{m^2} \bigg[ \sum_{ij} \Big( \sqrt{ K(x_i, x_i) + K(x_j, x_j)} - \D_{ij} \Big)^4 \bigg]^{\frac{1}{2}} \leq \frac{\Lambda^2}{m} \max \{\sqrt{2}q, \beta\}^2
\end{align*}

where the first and second inequalities follow by the same logic in Corollary \ref{corollary:linear}. The second equality is a simple identity. The third inequality holds because $K$ is PDS and the final inequality holds by assumption.
\end{proof}

As we can see, the Rademacher complexity depends on the maximum of the square root of the diameter of the data in the Hilbert space defined by the kernel $K$ and the radius of the distance space. In some cases, this bound is actually tighter than the one in Corollary \ref{corollary:linear}. In particular, when $K$ is the RBF kernel, $q = 1$. Thus, in this case, the bound is independent of the dimensionality of $\X$.

\section{Conclusion} \label{sec:conclusion}

We presented a theoretical analysis of the similarity learning problem and gave probabilistic learning guarantees. By formulating the problem as a kernel regression from $\X$ to $\Y$, we were able to give explicit, data dependent bounds, based on the Rademacher complexity of the hypothesis class $\H$. We showed that, in the linear case, model complexity depended on the radius of the data in either $\X$ or $\Y$. Notably, we showed that when learning with kernels, the bound is independent of the dimensionality of $\X$.

\bibliography{mybib}
\bibliographystyle{plain}

\end{document}